\newcommand{\blind}{0}
\newcolumntype{P}[1]{>{\centering\arraybackslash}p{#1}}
\theoremstyle{plain}
\newtheorem{theorem}{Theorem}[section]
\newtheorem{lemma}[theorem]{Lemma}
\theoremstyle{definition}
\newtheorem{definition}[theorem]{Definition}
\theoremstyle{remark}
\newtheorem{remark}[theorem]{Remark}
\begin{document}

\def\spacingset#1{\renewcommand{\baselinestretch}%
{#1}\small\normalsize} \spacingset{1}

\date{}

\if0\blind
{
  \title{\bf Semi-Structured Distributional Regression --\\ Extending Structured Additive Models by Arbitrary Deep Neural Networks and Data Modalities}
  \author{David R\"ugamer\hspace{.2cm}\\
    Institute of Statistics, RWTH Aachen\\
    Department of Statistics, LMU Munich\medskip\medskip \\ 
    Chris Kolb\hspace{.2cm}\\
    Department of Statistics, LMU Munich\medskip\\
    and\medskip \\
    Nadja Klein\hspace{.2cm}\\
    Chair of Statistics and Data Science, Humboldt-Universit\"at zu Berlin
    }
  \maketitle
} \fi

\if1\blind
{
  \bigskip
  \bigskip
  \bigskip
  \begin{center}
    {\LARGE\bf Semi-Structured Distributional Regression --\\ a Statistical Approach to Multimodal Deep Learning}
\end{center}
  \medskip
} \fi

\bigskip
\begin{abstract}
Combining additive models and neural networks allows to broaden the scope of statistical regression and extend deep learning-based approaches by interpretable structured additive predictors at the same time. Existing attempts uniting the two modeling approaches are, however, limited to very specific combinations and, more importantly, involve an identifiability issue. As a consequence, interpretability and stable estimation are typically lost. We propose a general framework to combine structured regression models and deep neural networks into a unifying network architecture. To overcome the inherent identifiability issues between different model parts, we construct an orthogonalization cell that projects the deep neural network into the orthogonal complement of the statistical model predictor. This enables proper estimation of structured model parts and thereby interpretability. We demonstrate the framework's efficacy in numerical experiments and illustrate its special merits in benchmarks and real-world applications.
\end{abstract}

\noindent%
{\it Keywords:} GAMLSS; identifiability; neural networks; multimodal learning; orthogonalization
\vfill

\newpage
\spacingset{1.5} 

\section{Introduction}

In many applications, it is crucial to also capture uncertainties of predictions besides mere point estimates. 
Different approaches exist to model this (aleatoric) uncertainty and estimate predictive distributions; which we refer to as distributional regression (DR). Examples are quantile regression \citep{Koenker.2005} or structured additive distributional regression~\citep[SADR; see, e.g.,][]{Rigby.2005,Klein.2015}. SADR explicitly models all distributional parameters of a distribution $\mathcal{D}$ and thereby allows to learn the whole distribution. The general idea of modeling distributions has been advocated early in the machine and deep learning literature \citep[e.g., density networks as proposed by][]{Bishop.1994}. Among the approaches that directly model predictive distributions, DR can be considered a natural extension of traditional statistical regression models. Classical mean regression is used to estimate the expectation $\mu := \mathbb{E}_{\mathcal{D}}(Y|\boldsymbol{\nu})$ of a target variable $Y$ using tabular input features $\bm{\nu}$. The most widespread mean regression is commonly known as the generalized linear model \citep[GLM;][]{Nelder.1972}, defining the set of candidate models as 
$$ \mu = \mathbb{E}_{\mathcal{D}}(Y|\boldsymbol{\nu}) = h(\bm{\nu}^\top \bm{w})$$ 
with model weights or coefficients $\bm{w}$ and response (or activation) function $h$. As an extension, generalized additive models~\citep[GAMs;][]{Wood.2017} allow for general structured additive predictors
$$\mu = h(\textstyle \sum_j f_j(\bm{\nu}))$$
that allow estimating functional (e.g.~non-linear, spatial) relationships  between features and the response through generic function space representations $f_j$. GLMs and GAMs usually assume only very few and low-dimensional interactions between features $\bm{\nu}$ in the additive predictor $\sum_j f_j(\bm{\nu})$. Due to the additivity and function space assumptions of feature effects, this so-called \emph{structured predictor} allows for a straightforward interpretation of the model components. For example, in a GAM with no interactions, 
$$\mu = h(\textstyle \sum_j f_j(\nu_j)),$$
the contribution of each effect $f_j(\nu_j)$ on $h^{-1}(\mu)$ can be quantified explicitly by holding all other effects in the additive predictor constant. SADR extends GAMs by combining the idea of DR and structured predictors. SADR thereby relates various, potentially different additive predictors to each distributional parameter of an arbitrary parametric distribution. This allows the practitioner to explicitly model the aleatoric uncertainty of the data generating processes. As in GAMs, the interpretation of feature effects on each of the distributional parameters, e.g., the non-linear effect of one feature on the variance of a normal distribution, is straightforward. Yet, structured additive models are limited in their application (mainly tabular data) and their simplistic model structure (e.g., not allowing higher-order interactions).



In this work we extend SADR to a more flexible approach we call \emph{semi-structured distributional regression}, combining the structured additive predictor with an \emph{unstructured} model part learned through a deep neural network (DNN). Our goal is a distributional modeling framework that covers classical deep learning (DL) applications, use cases requiring higher-order feature interactions or multimodal learning settings, while still preserving the interpretability of (structured parts of) the model. This framework can describe the entire conditional distribution of the response rather than only the mean (and thus point predictions), but also includes mean regression as a special case. Existing approaches in DL that fuse structured predictors for tabular data with other data modalities in a multimodal network are sometimes referred to as \emph{wide and deep neural networks}. Several approaches exist \citep[see, e.g.,][for two recent applications]{Cheng.2016, Poelsterl.2020}, but these wide and deep models only focus on mean regression and linear effects. Our proposal can also be seen as an extension of mean regression wide and deep models to distributional wide and deep networks. We term the combination semi-structured as the statistical regression predictor is always structured, whereas the DNN-based predictor is considered unstructured. Typically, the structured model part processes tabular feature information whereas the unstructured part can in addition also take other data formats such as images or texts into account.

\subsection{Related Work} 

Existing SADR models in statistics \citep[e.g.,][]{Klein.2015, Groll.2019} assume a complex structured additive predictor. Several authors have described amalgamations of a simpler statistical model and a neural network (NN). \citet{Sarle.1994} describes the commonalities and differences between statistical models and NNs, \citet{De.2011, Bras.2019} consider GAMs when framed as a NN. \citet{Agarwal.2020} propose learning the non-linear additive functions of GAMs within a NN using separate networks for each feature. Various authors in the statistical literature have also proposed a combination of statistical regression and NNs. \citet{Tran.2018} propose the class of deep GLMs, where only the conditional expectation $\mu$ is modeled through a predictor that is based on a DNN. Their model is similar to the deep Bayesian regression model of \citet{Hubin.2018} who implement exact Bayesian inference, while \citet{Tran.2018} consider an approximation via fixed form variational Bayes and allow for additional random effects.  In \citet{Umlauf.2018} a single-layer NN is included in a SADR model for a very specific choice of weight and bias generation, but this approach does not allow for more complex or deep network structures. Existing wide and deep NNs only focus on modeling the distribution mean \citep[e.g.,][]{Cheng.2016, Chen.2018} or the hazard in survival regression \citep{Poelsterl.2020}. \citet{Li.2021} propose a deep distributional regression approach  by transforming the estimation problem into a constrained multi-class classification problem but do not use (semi-)structured additive predictors.

\subsection{Simplified Problem Formulation} 
While approaches marrying statistical regression and DL exist, these methods either do not provide aleatoric uncertainty or are limited to special use cases. More importantly, none of the existing approaches that try to leverage interpretable regression models and DL actually preserve the interpretability of the structured regression part satisfactorily. This is due to an identifiability problem between the structured and unstructured predictors.
For instance, consider a deep GLM that fuses a DNN $d$ with inputs $\bm{\nu}$ additively with a linear GLM predictor using the same features: $$\eta=h^{-1 }(\mu) = \bm{\nu}^\top \bm{w} + d(\bm{\nu}).$$ From the universal approximation theorem \citep{Cybenko.1989} and related literature, it is well known that $d$ can potentially approximate any continuous function under certain conditions. Thus, without loss of generality, assume that $d(\bm{\nu}) = \bm{\nu}^\top \bm{m} + f(\bm{\nu})$, decoupling the learned effect of $d$ into a linear predictor part $\bm{\nu}^\top \bm{m}$ and some non-linear effect $f$ of $\bm{\nu}$. We can directly observe that the deep GLM encompasses an identifiability issue, as we can arbitrarily shift a linear portion of $\bm{\nu}$ from the linear model part in ${\eta}$ to the deep part $d$ of the model and vice versa, e.g.: $$\bm{\nu}^\top \bm{w} + d(\bm{\nu}) = \bm{\nu}^\top \bm{w} + \bm{\nu}^\top\bm{m} + f(\bm{\nu}) = \bm{\nu}^\top \bm{m} + \tilde{d}(\bm{\nu}).$$ While not relevant for prediction, this identifiability issue inhibits interpretability of the model. In the deep GLM example, it is unclear how much of the linear effect the model will be attributed to the structured predictor and how much to the deep model part. This problem becomes even more entangled for more complex structured predictors as commonly used in GAMs or SADR.

\subsection{Main Contributions} 


We present a novel neural network-based framework for the combination of SADR and (deep) NNs. We address challenges with estimation and tuning of such a model, and in particular, propose a solution to the inherent identifiability issue in this model class based on a well-chosen \emph{orthogonalization cell}. This cell permits the joint modeling of the structured model part and the unstructured DNN predictor in a unifying, end-to-end trainable network, while preserving the interpretability of the structured part. 
On the one hand, our model class subsumes classical statistical regression models such as GAMs or SADR as special cases within a NN architecture
while having similar or even superior estimation performance. 
On the other hand, due to the generality of our approach, extensions of existing regression approaches can be built in a straightforward fashion. This facilitates, e.g., combining interpretable structured effects of tabular features with a DNN to capture potential higher-order interactions, or multimodal learning problems in which multiple different data modalities such as tabular, image or text data are present.

After introducing semi-structured distributional regression and addressing the accompanied identifiability issue in Section \ref{distreg}, we describe implementation details in Section~\ref{architect}. In Section~\ref{sec:sim}, we first examine how to achieve the identifiability of our model. Thereafter, we investigate its estimation and prediction performance in comparison with state-of-the-art distributional models. We then apply our method to several benchmark data sets in Section~\ref{application}. Finally, we will use the proposed framework in a multimodal data setting to predict prices of rental apartments in Section~\ref{sec:airbnb}.  We implement our proposed framework in a corresponding software package, available at \url{https://github.com/davidruegamer/deepregression}. All codes to reproduce results from numerical experiments, benchmarks and our application are available at \url{https://github.com/davidruegamer/semi-structured_distributional_regression}.


\section{Semi-Structured Distributional Regression} \label{distreg}

In this section, we first introduce SADR  and define the basic notation. Afterward, we provide the theoretical basis to achieve identifiability in our general model class.

\subsection{Distributional Regression and Notation} 

As briefly introduced before, SADR models aim at estimating arbitrary parametric distributions $\mathcal{D}(\theta_1,\ldots,\theta_K)$ by learning the corresponding distributional parameters $\bm{\theta} = (\theta_1,\ldots,\theta_K)^\top\in\Theta\subseteq\mathbb{R}^{K}$. SADR allows to regress features $\boldsymbol{\nu}$ on potentially all parameters $\theta_k,\,k=1,\ldots,K$ of the response distribution $\mathcal{D}$. In the spirit of GAMs, each of the $K$ distributional parameters is related to (possibly different subsets of) available features $\boldsymbol{\nu}$ through a monotonic and differentiable response function $$\theta_k(\bm{\nu}) = h_k(\eta_k(\bm{\nu})).$$ 
The predictors $\eta_k(\bm{\nu})\in\mathbb{R}$ specify the relationship between features $\boldsymbol{\nu}$ and the (transformed) parameters $h_k^{-1}(\theta_k)$, while 
$h_k$ ensures that possible parameter space restrictions on $\theta_k$ are fulfilled, e.g. $h_k(\cdot)=\exp(\cdot)$ to ensure positivity for a variance parameter. 
Note that $\mathcal{D}$ itself can be also a more complicated distribution, e.g., a mixture of distributions.

\subsection{Semi-Structured Distributional Regression}

Our proposed framework advances SADR by extending the additive predictors $\eta_k$ to include one or more latent representations learned through (possibly different) DNNs. To this end we embed SADR into a NN and learn the distribution by adapting deep probabilistic modeling approaches. We denote our approach detailed in the following \emph{semi-structured distributional regression} (SSDR).

\subsubsection{Output Model Structure} \label{respmodstruct}

In order to implement SADR in a NN, we define the last layer of the network as a distributional layer that computes the predicted distribution $\mathcal{D}({\theta_1}(\bm{\nu}),\ldots,\theta_K(\bm{\nu}))$ based on the outputs $\eta_k(\bm{\nu})$ of the subnetworks for the $\theta_k, k = 1, \ldots, K$. Given a realization $y$ of $Y$, the model can be estimated by optimizing the negative log-likelihood (NLL) $- \ell(\bm{\theta}) = -\log p_{\mathcal{D}}(y|{\theta_1}(\bm{\nu}),\ldots,\theta_K(\bm{\nu}))$ based on the probability density or mass function $p_{\mathcal{D}}$ of $\mathcal{D}$ evaluated at the estimated parameters $\hat{\bm{\theta}}$. For simplicity, we consider a univariate response $Y$, but the generalization 
to multivariate responses is readily available. 

\subsubsection{Network Inputs}
The subnetworks for the distributional parameters each process the  feature vector (or different subsets of it) $\boldsymbol{\nu}$. We consider $\bm{\nu}$ to be the concatenated set of input features $\bm{x}=(x_1,\ldots, x_p) \in \mathbb{R}^p$ modeled as structured linear effects, input features $\bm{z}= (z_1,\ldots, z_r) \in \mathbb{R}^r$ modeled as structured non-linear effects, and features $\bm{u}=(u_1,\ldots,u_q)$ that are passed through a DNN and constitute the unstructured model part in one or more of the additive predictor(s) $\eta_k, k=1,\ldots,K$. 

\subsubsection{Predictor Structure}
With this notation, the semi-structured predictors $\eta_k$ 
in SSDR are assumed to be an additive decomposition of structured linear parts $f_{k,0}(\bm{x})=b_k+\bm{x}^\top\bm{w}_{k}$, structured non-linear functions $f_{k,j}(z_{j})$, and one or more unstructured predictors $d_{k,j}(\bm{u})$ constituting linear combinations of latent features learned through DNNs. The inputs for these DNNs can be subsets of the features $\bm{u}$ and can also be (partially) identical to the features $\bm{x},\bm{z}$:
$$\eta_k = f_{k,0}(\bm{x})+ \textstyle\sum_{j=1}^{r_k} f_{k,j}(z_j) + \textstyle\sum_{j=1}^{g_k}  d_{k,j}(\bm{u}).$$ Note that we have suppressed the index $k$ in the subset of feature inputs to not overload notation. 
However, SSDR allows to specify all additive terms individually for all parameters $\theta_k$. This can be relevant for example when prior information is available about which features are relevant for the location or the scale of the response.

We further assume that the DNN predictors can be represented as $d_{k,j}(\bm{u}) = \hat{\bm{u}}_{k,j}^\top \bm{\gamma}_{k,j}$, with latent features $\hat{\bm{u}}_{k,j}$ taken as the outputs of the network trunk (all DNN layers up to the penultimate layer) and $\bm{\gamma}_{k,j}$ being the last-layer weights forming the network head (cf.~Figure~\ref{nn_arch}). The functions $f_{k,j}(\cdot)$ represent penalized smooth non-linear effects of univariate or low-dimensional features using a linear combination of $L_{k,j}$ appropriate basis functions. A univariate non-linear effect of feature $z_j$ is, e.g., approximated by $f_{k,j}(z_j) \approx \sum_{l=1}^{L_{k,j}} {B}_{k,j,l}(z_j) {w}_{k,j,l}$, where ${B}_{k,j,l}(z_j)$ is the $l$th basis function (such as regression splines, polynomial bases or B-splines) evaluated at $z_j$. Tensor product representations allow for two- or moderate-dimensional non-linear interactions of a subset of $\bm{z}$. It is also possible to represent discrete spatial information or cluster-specific effects in this way~\citep[see, e.g.,][]{Wood.2017}. This representation also allows for random effects in the additive predictor, as these can represented by ridge-penalized linear effects (see Section~\ref{pen_and_prior} for details).

\subsection{Identifiability}\label{sec:ortho}

Identifiability is crucial when features overlap in the structured and unstructured model parts. 
More formally, we here address the following identifiability issue for DR with structured additive predictor(s) $\eta^{str}_k$ and unstructured additive predictor(s) $\eta_k^{unstr}$, $k=1,\ldots,K$:
\begin{definition}{\textbf{Semi-structured identifiability}}\label{def:ident}
We say that a semi-structured distributional regression is identified in its structured model parts if there exists no $$\xi_k \neq 0: \eta_k = \eta_k^{str} + \eta_k^{unstr} = (\eta_k^{str} - \xi_k) + (\eta_k^{unstr} + \xi_k) = \breve{\eta}_k^{str} + \breve{\eta}_k^{unstr} = \breve{\eta}_k,$$ where $\ell(\breve{\bm{\theta}}) = \ell(\bm{\theta})$ and $\theta_k(\eta_k)$ in $\breve{\bm{\theta}}$ is replaced by $\theta_k(\breve{\eta}_k)$ for any $k\in\{1,\ldots,K\}$.
\end{definition}
To derive a theoretical concept assuring identifiability in the following, we assume w.l.o.g.~that we are only interested in linear effects of pre-specified features $\bm{x}$ and allow for a single additional DNN predictor $d_k$ with arbitrary features $\bm{u}$ in $\eta_k$ for one distributional parameter $\theta_k$. The treatment of the more general case can be found in Supplementary Material~A. We again suppress the index $k$ in the feature vectors and design matrices for readability. As we will explain in a later remark, we consider identifiability of every additive predictor $\eta_k$ on the level of $n$ observations. Therefore, let $(\bm{x}_1^\top, \ldots, \bm{x}_n^\top)^\top =: \bm{X} \in \mathbb{R}^{n \times p}$, $1 \leq p\leq n$, be the structured feature matrix for $n \in \mathbb{N}$ observations and $\bm{w}_k \in \mathbb{R}^p$ the corresponding weights. Further, define the collection of $n$ feature vectors $(\bm{u}_1^\top, \ldots, \bm{u}_n^\top)^\top$ as $\bm{U}\in\mathbb{R}^{n \times q}$, which is fed into the DNN $d_k$, and let $\widehat{\bm{U}}_k \in \mathbb{R}^{n \times s}$, denote the matrix collecting $n$  latent feature vectors of length $s, s \leq n,$ obtained from the second-last layer of $d_k$. Finally, let $\bm{\eta}_k =(\eta_{k,1},\ldots,\eta_{k,n})^\top\in \mathbb{R}^n$ be the final predictor vector.

If not constrained, $d_k$ is able to capture linear effects of any of its features $\bm{U}$, including those also present in $\bm{X}$, thus making the attribution of shared effects to either one of the model parts in $\bm{\eta}_k$ not identifiable. The following Lemma shows how the orthogonalization induces a meaningful decomposition of learned effects in $\bm{\eta}_k$ and guarantees identifiability in the sense of Definition~\ref{def:ident}.

\begin{lemma}{ \textbf{Orthogonalization}}\label{lem:1} Let  $\bm{\mathcal{P}}_X \in \mathbb{R}^{n \times n}$ the projection matrix for which $\bm{\mathcal{P}}_X \bm{A}$ is the linear projection of $\bm{A} \in \mathbb{R}^{n \times s}, s \leq n$, onto the column space spanned by the features of $\bm{X}$ and $\bm{\mathcal{P}}^\bot_X := \bm{I}_n - \bm{\mathcal{P}}_X$ the projection into the respective orthogonal complement. Then, replacing the latent features $\widehat{\bm{U}}_k$ with $$\widetilde{\bm{U}}_k = \bm{\mathcal{P}}^\bot_X \widehat{\bm{U}}_k$$ and multiplying the result with the last layer's weights $\bm{\gamma}_k \in \mathbb{R}^s$, ensures a decomposition of the final predictor \begin{equation} \label{thetak}
\bm{\eta}_k = \bm{X}{\bm{w}}_k + \widetilde{\bm{U}}_k \bm{\gamma}_k
\end{equation} 
into an identified linear part learned from features $\bm{X}$ and a non-linear part learned from features $\widetilde{\bm{U}}_k$.
\end{lemma}
A proof of Lemma~\ref{lem:1} can be found in the Supplementary Material~A and can be used to proof the identifiability of structured terms in $\bm{\eta}_k$ as follows. 
\begin{theorem}{ \textbf{Identifiability}}\label{theo1} Replacing $\widehat{\bm{U}}_k$ with $\widetilde{\bm{U}}_k$ and multiplying the result with the weights $\bm{\gamma}_k$ ensures identifiability of the structured linear part in the final predictor \eqref{thetak}.
\end{theorem}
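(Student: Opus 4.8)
The plan is to reduce the claimed identifiability to the orthogonality of the two column spaces appearing in \eqref{thetak_proof} and to the recovery mechanism already supplied by Lemma~\ref{lem1_proof}. First I would fix what identifiability means in this setting: the functional components $\bm{X}\bm{w}$ and $\widetilde{\bm{U}}\bm{\gamma}$ must be uniquely determined by the predictor $\bm{\eta}_k$, so that no part of a genuine linear effect of $\bm{X}$ can be silently absorbed into the DNN term or vice versa. The concrete threat to identifiability is that an unconstrained $\widehat{\bm{U}}\bm{\gamma}$ could reproduce a linear effect $\bm{X}\bm{v}$, permitting the shift $\bm{X}\bm{w} + \widehat{\bm{U}}\bm{\gamma} = \bm{X}(\bm{w}+\bm{v}) + (\widehat{\bm{U}}\bm{\gamma} - \bm{X}\bm{v})$ without changing $\bm{\eta}_k$.

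The key observation is that the replacement $\widetilde{\bm{U}} = \bm{\mathcal{P}}^\bot_X \widehat{\bm{U}}$ forces $\bm{X}^\top \widetilde{\bm{U}} = \bm{X}^\top \bm{\mathcal{P}}^\bot_X \widehat{\bm{U}} = \bm{0}$, since $\bm{\mathcal{P}}^\bot_X$ projects into the orthogonal complement of the column space of $\bm{X}$. Hence every column of $\widetilde{\bm{U}}$ is orthogonal to every column of $\bm{X}$, so that $\bm{X}\bm{w}$ lies in the column space of $\bm{X}$ while $\widetilde{\bm{U}}\bm{\gamma}$ lies in its orthogonal complement, and the sum in \eqref{thetak_proof} becomes a direct (orthogonal) sum.

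I would then establish uniqueness by subtraction: assuming two parameter pairs $(\bm{w}_1,\bm{\gamma}_1)$ and $(\bm{w}_2,\bm{\gamma}_2)$ yield the same $\bm{\eta}_k$, differencing gives $\bm{X}(\bm{w}_1-\bm{w}_2) = \widetilde{\bm{U}}(\bm{\gamma}_2-\bm{\gamma}_1)$, where the left-hand side lies in the column space of $\bm{X}$ and the right-hand side in its orthogonal complement; the only shared element is $\bm{0}$, forcing both functional components to coincide. Equivalently, applying $\bm{\mathcal{P}}_X$ and $\bm{\mathcal{P}}^\bot_X$ to $\bm{\eta}_k$ recovers $\bm{X}\bm{w}$ and $\widetilde{\bm{U}}\bm{\gamma}$ respectively, which is exactly the case-by-case recovery in Lemma~\ref{lem1_proof}, now read as a uniqueness statement. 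This yields identifiability of the structured linear part against the DNN part at the level of the predictors, with parameter-level identifiability following under the usual full-column-rank assumptions on $\bm{X}$ and $\widetilde{\bm{U}}$.

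The main obstacle I anticipate is conceptual rather than computational: pinning down the correct notion of identifiability (functional components versus raw parameters) and making explicit that the DNN term, once orthogonalized, can no longer encode any linear-in-$\bm{X}$ direction. Once the orthogonality $\bm{X}^\top\widetilde{\bm{U}} = \bm{0}$ is in place, the remaining argument is a short linear-algebra fact about direct sums, and most of the work is already carried out by Lemma~\ref{lem1_proof}; the theorem chiefly reinterprets that lemma's projection identities as the assertion that the decomposition is unique.
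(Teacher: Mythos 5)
Your proposal is correct and is essentially the paper's own argument: both proofs rest on the single fact that the orthogonalized outputs satisfy $\bm{\mathcal{P}}_X\widetilde{\bm{U}} = \bm{0}$ (equivalently $\bm{X}^\top\widetilde{\bm{U}} = \bm{0}$), so the DNN term cannot reproduce any direction in the column space of $\bm{X}$. The paper packages this as a contradiction --- positing a nonzero absorbed component $\bm{\xi} = \bm{\mathcal{P}}_X\widetilde{\bm{U}}\bm{\gamma}$ and concluding $\bm{\xi} \equiv \bm{0}_{n\times 1}$ from Lemma~\ref{lem1_proof} --- while you package it as a direct-sum uniqueness argument by differencing two decompositions; these are the same proof in different clothing, with yours being somewhat more explicit about what identifiability means and about the full-column-rank conditions needed to pass from uniqueness of the functional components to uniqueness of the parameters.
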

A proof is given in Supplementary Material~A.

\begin{remark} 
If $\bm{x}$, $\bm{z}$ and $\bm{u}$ overlap in their features, we first reparameterize the structured non-linear model part to ensure identifiability between the structured linear and non-linear parts and then combine the two into one joint predictor (not shown in Figure~\ref{nn_arch}). The non-linear part can then be interpreted as the non-linear deviation from the corresponding linear effect. Finally, we apply the orthogonalization using the joint structured predictor to separate it from the unstructured DNN predictor.
\end{remark}

\begin{remark} 
When the DNN and the structured part share $p$ columns with $p>n$,  $\bm{\mathcal{P}}^\bot_X$ is equal to a zero matrix $\bm{0}_{n \times n}$, making $\widehat{\bm{U}}_k$ de facto irrelevant. In this case the estimated model is equivalent to the defined structured model. We see this edge case rather as a property than a limitation of our framework as the key requirement of the proposed architecture is to provide identifiable structured effects, which in this case is only possible by excluding the unstructured predictor part.
\end{remark}

\begin{remark}
Last, we note that an alternative orthogonalization type and commonly used technique to make effects identifiable in structured models \citep[see, e.g.,][]{Ruegamer.2018} is to calculate $\bm{\Xi}_k$, a matrix with columns spanned by $\operatorname{ker}(\hat{\bm{U}}_k^\top \bm{X})$, and then use $\widetilde{\bm{U}}_k = \hat{\bm{U}}_k \bm{\Xi}_k$. This would have the advantage of being applied on the columns of $\hat{\bm{U}}_k$ and not its rows and would therefore make the orthogonalization independent of the sample or batch size. However, as $\hat{\bm{U}}_k$ is updated in each iteration, the required null space $\operatorname{ker}(\hat{\bm{U}}_k^\top \bm{X})$ is also changing and can thus not be used in an end-to-end differentiable architecture (as gradients cannot be calculated for a random null space). This is the reason we chose the orthogonalization as proposed in Lemma~\ref{lem:1} and consider identifiability on the level of observations.
\end{remark}

\subsection{Unifying Network Architecture}

In contrast to most previous approaches that combine structured regression and DNNs, we propose a unifying network that learns the structured effects of $\bm{x}$ and $\bm{z}$ using single unit hidden layers with linear activation functions and different regularization terms for each input type and each distributional parameter. The DNN model part(s) processing $\bm{u}$ can be arbitrarily specified to, e.g., incorporate complex feature interactions. To ensure identifiability, we propose an orthogonalization cell based on the previous theorem. This allows us to combine structured and unstructured feature effects in a distributional regression setting while ensuring identifiability. Figure~\ref{nn_arch} and its caption provide a detailed explanation.
 \begin{figure}[!h]
 \vskip 0.2in
 \begin{center}
 \small
 \centerline{\includegraphics[width=0.5\textwidth]{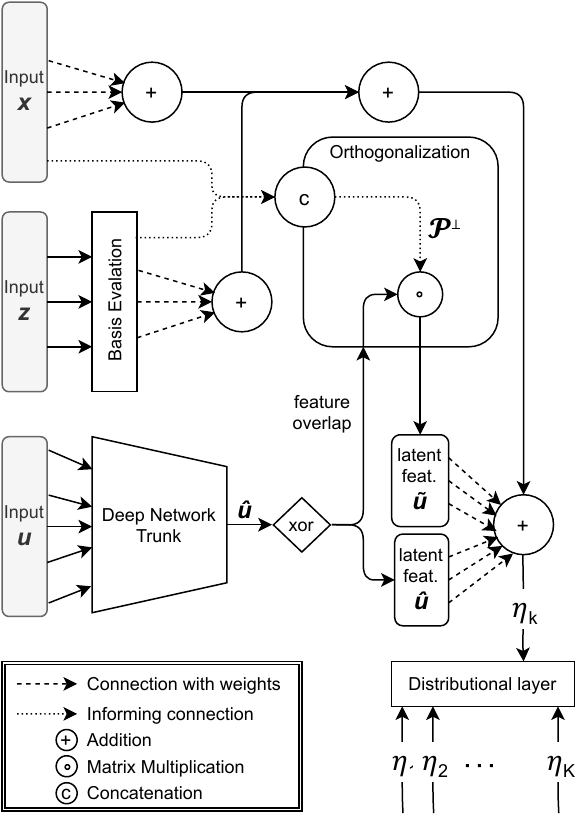}} 
 \caption{Exemplary SSDR architecture incorporating an orthogonalization cell: Structured linear features $\bm{x}$ and structured non-linear features $\bm{z}$ (represented by  their respective basis evaluations) are fed into the cell, concatenated and combined with the DNN using the orthogonalization operation. The solid lines represent the informing connections that either simply combine information or create a projection matrix $\mathcal{P}^{\bot}$. The latter is multiplied with the latent features $\hat{\bm{u}}$ to disentangle the structured and unstructured parts. The final additive predictors $\eta_k$ are created by adding the structured predictors to the (orthogonalized) linear combination of latent features $\tilde{\bm{u}}$ and their weights before being passed on to the distributional layer.}
 \label{nn_arch}
 \end{center}
\vskip -0.2in
\end{figure}
%
\begin{remark}
For latent features $\hat{\bm{u}}_{k,j}$ learned in DNNs $d_{k,j}$, a distinction is made between DNNs whose inputs are also part of the structured inputs $\bm{x},\bm{z}$ and DNNs whose inputs only appear in the unstructured predictor (xor-node in Figure~\ref{nn_arch}). In the latter case, the DNN outputs are directly summed up as a weighted combination with structured predictors (lower path after xor-node in Figure~\ref{nn_arch}) and fed into the distributional layer. For those $\hat{\bm{u}}_{k,j}$ whose DNNs $d_{k,j}$ also share inputs with $\bm{x}$ or $\bm{z}$, the orthogonalization operation is applied before adding its outputs $\tilde{\bm{u}}_{k,j}$ as weighted sums to the remaining predictor parts. While this approach ensures the identifiability of the structured model part, a custom orthogonalization for specific (non-overlapping) inputs in $\bm{u}$ might also be interesting in applications (see Section~\ref{sec:conclusion}).
\end{remark}

\section{Implementation Details} \label{architect}

We now provide further details on various implementation aspects of the SSDR framework. 

\subsection{Penalization, Optimization and Tuning} \label{pen_and_prior}

It is common ground that gradient descent (GD) routines used to train NNs hold an implicit regularization behaviour \citep[see, e.g.,][]{Arora.2019}. However, in most of our experiments we observe rather coarse estimated non-linear effects or even convergence difficulties when not penalizing structured non-linear effects. We therefore allow for additional quadratic penalization of structured non-linear effects 
by regularizing the corresponding weights, with the regularization strength controlled through smoothing parameters $\lambda_{k,j}\in\mathbb{R}$ for each structured non-linear term $f_{k,j}, k \in \{1, \ldots, K\}, j \in\lbrace 1,\ldots, r_k\rbrace$ and appropriate penalty matrices $\bm{S}_{k,j} \in \mathbb{R}^{L_{k,j} \times L_{k,j}}$ \citep[see, e.g.,][]{Wood.2017}. Optimization of the model is done by minimizing the corresponding penalized NLL of  $\mathcal{D}$
$$-\log p_{\mathcal{D}}(y|\hat{\bm{\theta}}(\bm{\nu})) +  
\textstyle \sum_{k = 1}^K  \textstyle \sum_{j=1}^{r_k} \lambda_{k,j} \bm{w}_{k,j}^\top \bm{S}_{k,j} \bm{w}_{k,j}. $$
Tuning a large number of smooth terms, potentially in combination with DNNs, is a challenging task. Non-linear structured additive models alone require a sophisticated estimation procedure, often based on second-order information and full batch training. This gets even more complicated in semi-structured models.  As DL platforms allow for custom optimization routines, one possible optimization strategy could be an iterative procedure alternating between the structured and unstructured model parts. This would allow the use of a stochastic GD routine for the DNN part(s), and a classical statistical optimization for the structured model part(s) including the estimation of smoothness parameters. Here however, we propose an alternative strategy that fosters easy training and tuning by defining the smoothness of each effect in terms of the degrees of freedom \citep[$df$; see, e.g.,][]{Buja.1989}. This approach can be implemented efficiently using the Demmler-Reinsch Orthogonalization \citep[DRO, cf.][]{Ruppert.2003}. The latter can easily be parallelized or sped up using a randomized singular value matrix decomposition \citep{Erichson.2016}. We thereby also allow for meaningful default penalization and comparability of effect complexities by setting $df_{k,j}$ to the same value $df_k^\ast$ for all smooth effects $j$ of the $k$th parameter. At the same time, we ensure enough flexibility by choosing $df_k^\ast = \min_{j \in\lbrace 1,\ldots, r_k\rbrace} \max df_{k,j}$, i.e., the largest possible degree of freedom that leaves the least flexible smoothing effect unpenalized and regularizes all others to have the same amount of flexibility. After estimating the smoothing parameters $\lambda_{k,j}\in\mathbb{R}$, optimization can be done efficiently and jointly for all model components using a stochastic GD routine.



\subsection{Implementation of the Orthogonalization} \label{sec:oz}

The orthogonalization proposed in the previous section can be implemented in different variants, depending on the batch size used in training. When using full-batch GD, the projection $\bm{\mathcal{P}}^\bot_X$ can be calculated upfront and applied in each epoch, yielding exact orthogonalization. For mini-batch training with batch matrices $\bm{X}_b$, we calculate the respective projections for each batch using, e.g., a QR decomposition $\bm{\mathcal{P}}^\bot_{X_b} = (\bm{I}-\bm{Q}_b\bm{Q}_b^\top)$ with $\bm{X}_b = \bm{Q}_b\bm{R}_b$ that allows for stable calculation and inclusion in fully automatic differentiated routines \citep[see, e.g.,][]{Roberts.2020}. In the next section, we will investigate the difference between full-batch GD and its stochastic GD variant using mini-batch orthogonalization.

\section{Numerical Experiments} \label{sec:sim}

 \begin{figure*}[tb]
 \vskip 0.2in
 \begin{center}
 \centerline{\includegraphics[width=\textwidth]{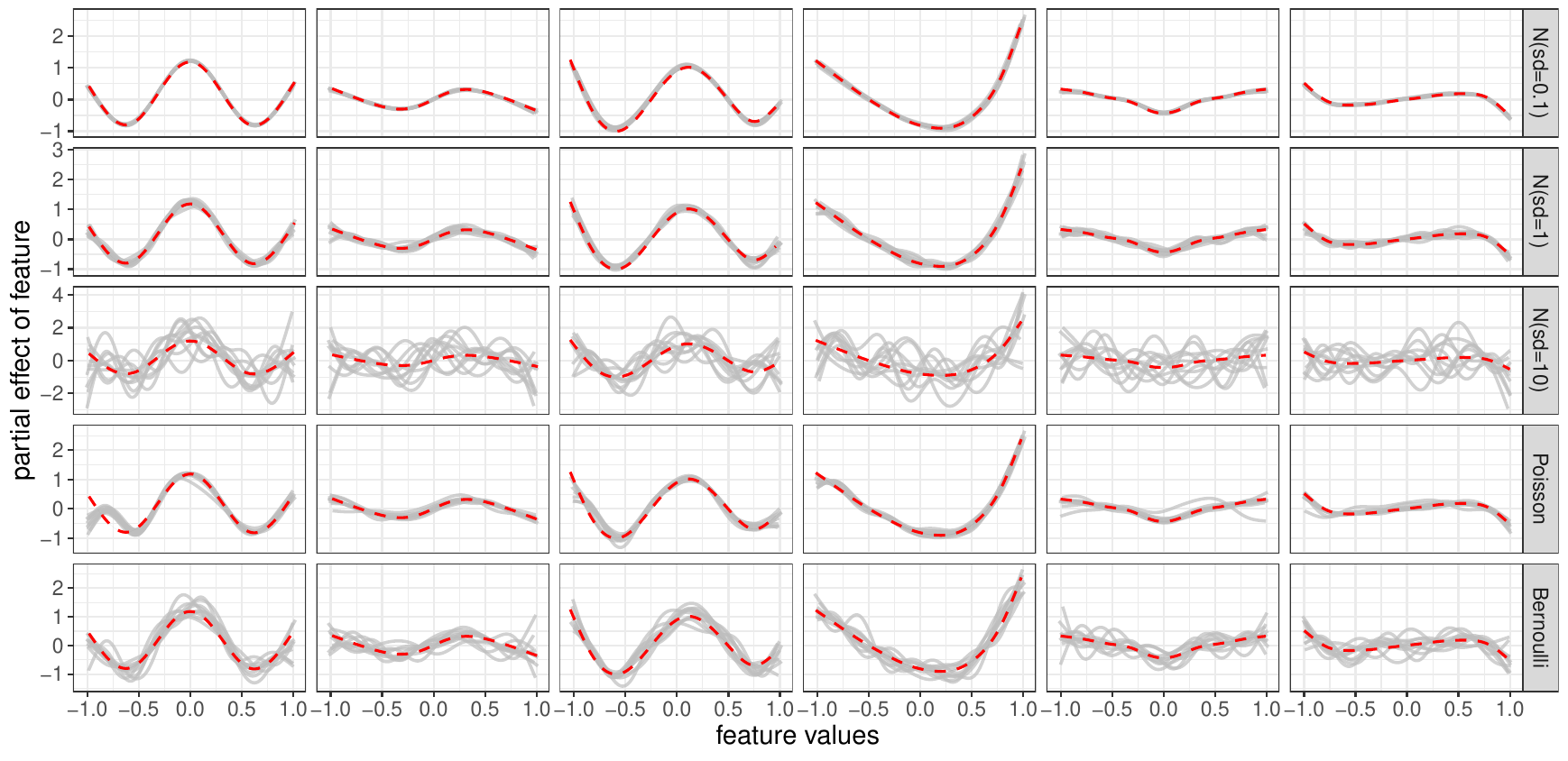}} 
 \caption{Non-linear partial effects of six selected features (columns) on the mean of the response from the five different distributions (rows) with true effect in red and estimated functions of the 10 replicates in grey.}
 \label{vardecomp}
 \end{center}
\vskip -0.2in
\end{figure*}

We conduct three different simulation experiments to assess the goodness-of-fit in terms of the structured effect estimation and the prediction performance. Our first experiment demonstrates the efficacy of the orthogonalization cell in practice. In the second experiment, we examine the properties of the orthogonalization operation when using mini-batch stochastic GD training. The third experiment compares the proposed framework against classical statistical regression frameworks to demonstrate that estimating a penalized SADR model works equally well when cast as a NN. Unless stated otherwise, the structured non-linear effects $f_{k,j}$ in all experiments and benchmarks are instantiated using thin-plate regression splines \citep[see, e.g.,][]{Wood.2017}.

\subsection{Identifiability and Interpretability} \label{sec:simident}

Here, we mimic a situation where the  practitioner's interest explicitly lies in decomposing certain feature effects into structured linear, structured non-linear and unstructured non-linear parts, for reasons of interpretability. We simulate 10 data set replicates with $n=1500$ observations and $p=10$ features 
drawn independently from a uniform distribution $\mathcal{U}(-1,1)$. For the response $Y$ we consider the cases $Y\sim\mathcal{N}(\eta,s^2)$, $s=0.1,1,10$ (Normal), $y\sim\textit{Ber}(\text{sigmoid}(\eta))$ (Bernoulli) and $y\sim\textit{Po}(\exp(\eta))$ (Poisson). The predictor $\eta$ ($K=1$) contains linear and non-linear effects of all features, as well as an interaction term of the 10 features: $$
\eta = \textstyle b + \sum_{j=1}^{10} x_jw_j + \textstyle\sum_{j=1}^{10} f_j(x_j) + \log_{10}(\textstyle\prod_{j=1}^{10} (x_j+2)).
$$
The weights $w_1, \ldots, w_{10}$ are defined as $\frac{2}{1}, \frac{2}{2}, \ldots, \frac{2}{10}$. $f_1, \ldots, f_{10}$ are ten different non-linear functions defined in the Supplementary Material~B. Our framework explicitly models the true linear and non-linear terms by separating both effects via orthogonalization. Further, to account for the interaction, an unstructured DNN predictor is defined using a fully connected network with ReLu activation and 32 and 16 hidden units in two hidden layers, respectively. By projecting the output of the second-last layer into the orthogonal complement of the structured predictors, 
we ensure identifiability of the linear and non-linear effects. We do not fine-tune the model but rather use the DRO approach described in Section \ref{pen_and_prior} and train for a fixed number of $2000$ epochs. All models are optimized using the Adam optimizer \citep[][]{kingma2014adam} with a learning rate of 0.01 and a batch size of 32.

\emph{Results}. Figure \ref{vardecomp} visualizes the estimated and true non-linear relationships between selected features and the response. 
Overall the resulting estimates in Figure \ref{vardecomp} demonstrate the capability of the framework to recover the true partial non-linear effects, and only the simulation using a normal distribution with $\mbox{Var}(Y) = 100$, which amounts to a maximum of 4\% signal-to-noise ratio (predictor variance divided by noise variance), shows an overfitting behavior. Most importantly, the results highlight that the DNN predictor, which is also fed all 10 features, does not learn the linear or non-linear part of the structured effects, and thereby the structured part of the additive predictor $\eta$ remains identifiable and interpretable as constructed and desired.

\subsection{Mini-Batch Orthogonalization}

As discussed in Section~\ref{sec:oz}, a full-batch orthogonalization yields an exact projection, while various large-scale applications require mini-batch training, and hence only permit an approximate orthogonalization. Here, we provide evidence that the latter works equally well in practice. We therefore simulate linear models $\bm{Y} = \bm{X} \bm{w} + \bm{\varepsilon}$, where $\bm{X} \in \mathbb{R}^{n \times p}$ and $\bm{\varepsilon}\sim\mathcal{N}(\bm{0}_{n},\bm{I}_{n})$, with different number of samples $n \in \{100, 1000, 10000 \}$ and features of size $p \in \{1, 10\}$ drawn independently from $\mathcal{N}(0,1)$. We define $\bm{w}$ to be equally spaced coefficients from $-3$ to $3$ ($w=-3$ for $p=1$). We then check whether the true effects of these features can be recovered in the presence of a DNN (fully-connected NN with ReLU activation and 100 and 50 hidden units in two hidden layers). The DNN is provided the same input features as defined for the structured part using either no orthogonalization, full-batch or mini-batch orthogonalization with batch sizes $B \in \{ 25, 50 \}$. The models are trained for $1000$ epochs using Adam optimizer with early stopping and a learning rate of $0.001$. We repeat each experiment 10 times based on different realizations of standard normal errors $\bm{\varepsilon}$.

\emph{Results}. Figure \ref{fig:boxplotsOZ} compares the oracle linear model with the SSDR model with and without orthogonalization by looking at the squared deviations of the estimated model coefficients $\bm{w}$. Results show no notable difference between training with different batch sizes when using the orthogonalization. The figure also highlights the problem of identifiability when not orthogonalizing, yielding large deviations from the ground truth in all settings except for the ones with a large number of observations ($n=10000$) and mini-batch training.

\begin{figure}
    \centering
    \includegraphics[width=0.8\textwidth]{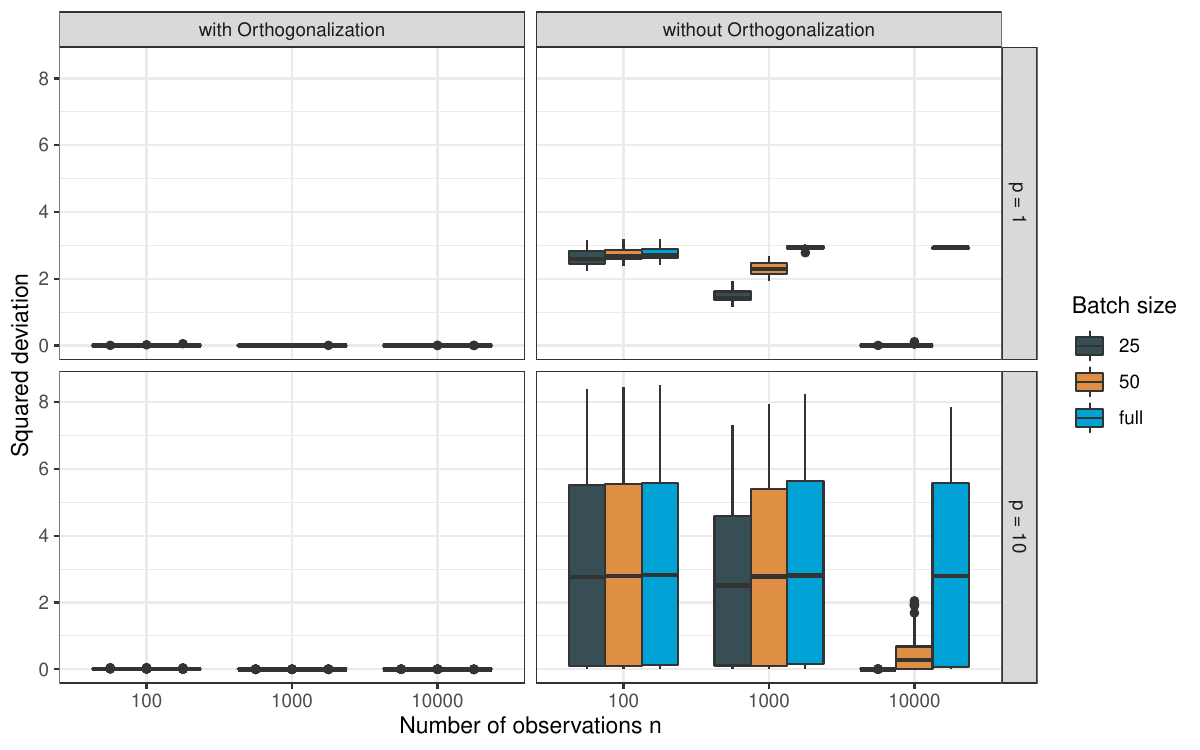}
    \caption{Squared deviations of model coefficients of the oracle model and the estimated model (y-axis) for different numbers of observations (x-axis), different numbers of features (rows), different batch sizes (colors) both with and without orthogonalization (columns). The differences in squared deviations for ``with orthogonalization'' are on the scale of $10^{-5}$ and thus negligible.}
    \label{fig:boxplotsOZ}
\end{figure}

\subsection{Model Comparison} \label{sec:simcomp}

In this simulation, we compare the estimation and prediction performance of our framework with three other approaches that implement SADR based on a location, scale and shape (LSS) parameterization. More specifically, we run our approach against a likelihood-based optimization \citep[gamlss;][]{Rigby.2005}, a Bayesian optimization \citep[bamlss][]{Umlauf.2018} and a model-based boosting routine \citep[MBB;][]{Mayr.2012}. 
For three different distributions (normal, gamma, logistic), we investigate a combination of different sample sizes $n\in \{300,2500\}$ and different numbers of linear feature effects in the location ($p \in \{10,75\})$, while fixing the number of linear effects in the scale parameter to two. In addition to the linear feature effects, we add 10 non-linear effects for the location and two non-linear effects for the scale. Further details can be found in Supplementary Material~B. The SSDR models are trained using Adam with a learning rate of $0.001$, batch size of 32, and the number of epochs selected by 5-fold cross-validation. The simulation results are replicated 20 times. 

\emph{Results}. Table \ref{tab:comparison} summarizes the mean log-scores of $0.25n$ test data points and mean RMSE measuring the average of the deviations between true and estimated structured effects. We observe that our approach (SSDR) often yields the best or second-best estimation and prediction performance while being robust to more extreme scenarios in which the number of observations is small in comparison to the number of feature effects. In this situation, other approaches tend to suffer from convergence problems. We conclude that the estimation of SADR within our DNN framework works at least as good as classical statistical approaches, but yields more robust results in high-dimensional settings.

\setlength\tabcolsep{1.5pt}
\begin{table*}[tb]
\begin{center}
\begin{scriptsize}
\begin{tabular}{P{8pt}P{8pt}P{8pt}|P{48pt}P{48pt}P{48pt}P{48pt}|P{48pt}P{48pt}P{48pt}P{48pt}}

&&&  \multicolumn{4}{c}{Negative Log-scores} & \multicolumn{4}{c}{RMSE} \\
  & $n$ & $p$ & bamlss & SSDR & gamlss & MBB &   bamlss & SSDR & gamlss & MBB \\ \hline
\multirow{4}{*}{\rotatebox[origin=c]{90}{Normal}} & \multirow{2}{*}{\rotatebox[origin=c]{90}{300}} & 10 & \textbf{1.51} (0.68) & 1.85 (0.55) & 7.08 (8.07) & 4.20 (0.47) &  0.89 (0.61) & \textbf{0.37} (0.28) & 0.59 (0.15) & 0.84 (0.22) \\ 
    &  & 75 &  $> 10e{20}$ & \textbf{2.84} (0.94) & $> 10e{20}$ & 10.4 (0.95) &   1.05 (1.14) & \textbf{0.47} (0.42) & 0.67 (0.26) & 1.29 (0.86) \\ 
  & \multirow{2}{*}{\rotatebox[origin=c]{90}{2500}} & 10 & \textbf{0.55} (0.06) & 0.96 (0.18) & 0.57 (0.08) & 3.71 (0.24) &   0.50 (0.71) & \textbf{0.22} (0.22) & 0.25 (0.34) & 0.66 (0.35) \\ 
    &  & 75 & \textbf{0.64} (0.06) & 1.11 (0.12) & 0.69 (0.07) & 8.85 (0.51) &   0.48 (0.70) & \textbf{0.19} (0.22) & 0.24 (0.32) & 1.14 (0.64) \\ 
   \multirow{4}{*}{\rotatebox[origin=c]{90}{Gamma}} & \multirow{2}{*}{\rotatebox[origin=c]{90}{300}} & 10 & 1.15 (0.10) & 1.32 (0.31) & \textbf{1.04} (0.09) & 1.13 (0.11) &  0.13 (0.06) & 0.14 (0.04) & \textbf{0.08} (0.02) & 0.11 (0.04) \\ 
    &  & 75 & \textbf{1.50} (0.36) & 2.34 (0.87) & 2.05 (0.99) & 1.56 (0.15) &  0.15 (0.06) & 0.18 (0.07) & \textbf{0.12} (0.05) & 0.14 (0.05) \\ 
    & \multirow{2}{*}{\rotatebox[origin=c]{90}{2500}} & 10 & 1.01 (0.02) & 0.93 (0.03) & \textbf{0.83} (0.02) & 0.96 (0.03) &   0.19 (0.18) & 0.10 (0.04) & \textbf{0.04} (0.03) & 0.10 (0.08) \\ 
 & & 75 &   1.01 (0.03) & 1.24 (0.05) & \textbf{0.84} (0.03) & 1.01 (0.04) &   0.19 (0.20) & 0.07 (0.03) & \textbf{0.04} (0.02) & 0.11 (0.07) \\ 
    \multirow{4}{*}{\rotatebox[origin=c]{90}{Logistic}} & \multirow{2}{*}{\rotatebox[origin=c]{90}{300}} & 10 & \textbf{1.44} (0.14) & 1.75 (0.12) & 4.38 (2.84) & 3.28 (0.37) &  1.69 (0.98) & \textbf{0.28} (0.14) & 0.61 (0.18) & 0.97 (0.36) \\ 
     &  & 75 & 2.55 (0.48) & \textbf{2.22} (0.18) & 124 (104) & 4.89 (0.26) &   1.82 (1.21) & \textbf{0.26} (0.15) & 0.66 (0.44) & 1.30 (0.84) \\ 
    & \multirow{2}{*}{\rotatebox[origin=c]{90}{2500}} & 10 & 1.7 (0.04) & \textbf{1.15} (0.06) & 1.15 (0.04) & $> 10e{20}$ &  2.20 (0.82) & \textbf{0.18} (0.15) & 0.24 (0.32) & 0.73 (0.41)  \\  
     &  & 75 & 2.47 (0.36) & \textbf{1.16} (0.08) & 1.23 (0.05) & 4.71 (0.1) & 
 3.52 (0.97) & \textbf{0.13} (0.13) & 0.25 (0.32) & 1.19 (0.66) \\ \hline
   \end{tabular}
\end{scriptsize}
\end{center}
\vskip -0.2in
 \caption{Median and median absolute deviation of the mean negative predictive log-scores and mean RMSE values of estimated weights and non-linear point estimates across all settings and 20  replications. The best performing approach is highlighted in bold.}
    \label{tab:comparison}
\end{table*}

\section{Benchmark Studies}\label{application}

In addition to the experiments on synthetic data, we provide a number of benchmarks in several real-world data sets in this section. If not stated otherwise, we report measures as averages and standard deviations (in brackets) over 20 random network initializations. Unless reported otherwise, activation functions for DNN predictors in SSDR are ReLU for hidden layers and linear for output layers. Additional investigation of our framework checking its competitiveness for quantile regression, calibrated regression, and high-dimensional classification problems can be found in Supplementary Material~C.

\subsection{Deep Mixed Models}\label{subsec:deepmixed}

\citet{Tran.2018} use a panel data set with 595 individuals and 4165 observations from \citet{Cornwell.1988} as an example for fitting deep mixed models by accounting for within subject correlation. Performance is measured in terms of within subject predictions of the log of wage for future time points. We follow their analysis by training the model on the years $t=1, \ldots, 5$ and predicting the years $t=6,7$. We use a normal distribution with constant variance and model the mean with the same NN predictor as implemented by \citet{Tran.2018}. However, instead of being part of the DNN predictor, the subject ID is included as a structured random effect $w_i$ for each individual $i$: $\mbox{log-wage}_{i,t} \sim \mathcal{N}(b + w_i + d_\mu(\bm{x}_{i,t}), \exp(d_\sigma(\bm{x}_{i,t})))$, with $\bm{x}_{i,t}$ being the 12 features also used in \citet{Tran.2018}, individual specific random effects $w_i \sim \mathcal{N}(0,\tau^2) \, \forall i$, and $d_\mu$, $d_\sigma$ two different fully-connected NNs with two hidden layers and five neurons each. The model is estimated with default settings of Adam, batch size of 32 and the number of epochs chosen by cross-validation.

\emph{Results}. Our approach yields an average MSE of $0.04$ ($0.005$) which makes our method competitive with the approach of \citet{Tran.2018}, who report an MSE of $0.05$ for the given data split.

\subsection{Deep Calibrated Regression}

Next, we use the data sets \textit{Diabetes}, \textit{Boston}, \textit{Airfoil}, and \textit{Forest Fire} analyzed by \citet{Song.2019} to benchmark our SSDR approach against the two \textit{post-hoc} calibration methods  isotonic regression~\citep[IR;][]{kuleshov18a} and the GP-Beta model \citep[GPB;][]{Song.2019} with 16 inducing points. The uncalibrated model for the latter two is a Gaussian process regression (GPR) which performed better than ordinary least squares and standard NNs in \citet{Song.2019}. SSDR directly models the parameters of a normal distribution. 
Here, we only fine-tune the specific structure for the predictors $\eta_{\mu}=\mu,\eta_{\sigma}=\log(\sigma)$, which consist of structured linear and non-linear effects, as well as a DNN for all features (see the Supplementary Material~C.2 for details about each model's specification). We split the data into 75\%  for training and 25\% for model evaluation, measured by negative log-scores. 

\emph{Results}. Table~\ref{uci} suggests that compared to other calibration techniques our method yields more stable results while permitting interpretable structured effects for features of interest. Even though we did not fine-tune the output distribution, SSDR performs as good as the benchmarks in terms of average negative~log-scores.

\begin{table}[htbp]
\begin{center}
\begin{small}
\begin{tabular}{lcccc}
  & SSDR & GPR & IR & GPB \\ \hline
  Diabetes & \textbf{5.33} (0.00)  & 5.35 (5.76)  & 5.71 (2.97)  & 5.33 (6.24) \\
  Boston  & 3.07 (0.11) & 2.79 (2.05) & 3.36 (5.19) & \textbf{2.70} (1.91) \\
Airfoil  & \textbf{3.11} (0.02) & 3.17 (6.82) & 3.29 (1.86) & 3.21 (4.70) \\
Forest F.  & 1.75 (0.01)  & 1.75 (7.09)  & \textbf{1.00} (1.94)  & 2.07 (9.25) \\
\hline
\end{tabular}
\end{small}
\end{center}
\vskip -0.2in
\caption{Comparison of negative~log-scores 
of different methods (columns) on four different UCI repository data sets (rows).}
\label{uci}
\end{table}

\section{Application to Multimodal Data} \label{sec:airbnb}

Finally, we present an application of our framework to Airbnb price listing data, available at \url{http://insideairbnb.com/get-the-data.html}. Various data modalities are included in the data, such as numeric variables for latitude and longitude, integer variables such as the number of bedrooms, textual information such as a room description, dates as well as an image of each property (size $200\times200\times3$), see Table \ref{tab:feat}. We will focus on apartments in Munich, Germany consisting of 3,504 observations and 73 features of mixed data modalities. Our goal is to predict the listing price of each apartment in an interpretable structured additive model, while also accounting for the room description and images. As the data set is relatively small and the information content in the images seems also not substantially decisive for the room price, we use this application to demonstrate a further property of the orthogonalization---its regularization effect. To this end, we compare the model using two different specifications. One model is specified with structured and unstructured predictors which are simply added up. The other specification regularizes the learned latent image effects by orthogonalizing them w.r.t.~all specified structured features. In other words, the second model subtracts the structured information from the learned image information in order to regularize the DNN. For model inspection and fine-tuning, we set aside 10\% of the data for testing and use 10\% of the resulting training data for early stopping. 

\emph{Model Specification} For $\mathcal{D}$, we use a log-normal distribution. $\mathcal{D}$ is parameterized by its location (mean) $\mu$ and scale parameter $\sigma$ with corresponding additive predictors $\eta_\mu$ and $\eta_\sigma$, respectively. To model the mean of the logarithmic apartment prices ($\mu$), we define our linear predictor $\eta_\mu$ as 
\begin{equation*}
    \begin{split}
        \eta_\mu = b_{\mu} &+ \textstyle \sum_{j=1}^{13} x_j w_{\mu,j} + \textstyle \sum_{j=1}^4 f_{\mu,j}(z_j) + f_{\mu,5}(z_{5,1},z_{5,2})\\
        &+ d_{\mu,1}({u}_1) + d_{\mu,2}(u_2) + d_{\mu,3}(u_3), 
            \end{split}
\end{equation*}
with features given in Table~\ref{tab:feat}.
\begin{table}[tb]
    \centering
\begin{small}
\begin{tabular}{llll}
\textbf{Feature(s)} & \textbf{Description} && \textbf{Effect} \\ \hline
    $x_1,\ldots,x_3$ & room types && dummy-effect \\
    $x_4,\ldots,x_9$ & number of beds && dummy-effect \\
    $x_{10},\ldots,x_{13}$ & number of bedrooms && dummy-effect \\ \hline
    $z_1$ & accommodates & & thin-plate regression splines\\
    $z_2$& last review (in days)&&thin-plate regression splines\\
    $z_3$&reviews per month&&thin-plate regression splines\\
    $z_4$&review scores&&thin-plate regression splines\\
    $z_{5,1}, z_{5,2}$ & longitude, latitude && tensor product spline \\ \hline
    $u_1$ & images && CNN\\
    $u_2$ & description && embedding layer + FC \\
    $u_3$ & $x_1,\ldots,x_{13},z_1,\ldots,z_4$ && FC DNN (16-4-2)
\end{tabular}
\end{small}
    \caption{Overview of features in the data set, their description and how the features are parameterized. FC denotes a fully-connected layer with a single unit. The embedding layer is of size 100 for a lookup of 10,000 words with maximum sentence length of 100. The CNN architecture is described in the Supplementary Material~D.2. The FC DNN architecture consists of 16, 4 and 2 FC layers with dropout layers in between and is used to model the interactions of structured features.}
    \label{tab:feat}
\end{table}
For the scale parameter $\sigma$ we observe a better validation performance when only including a few structured predictors. As for the mean we include the room type as linear (dummy-)effects and a tensor product spline for location. Further model specifications and test results are given in Supplementary Material~D.
 \begin{figure}[tb]
 \vskip 0.2in
 \begin{center}
 \centerline{\includegraphics[trim = {3.5cm, 0, 3.5cm, 0}, width=0.63\columnwidth]{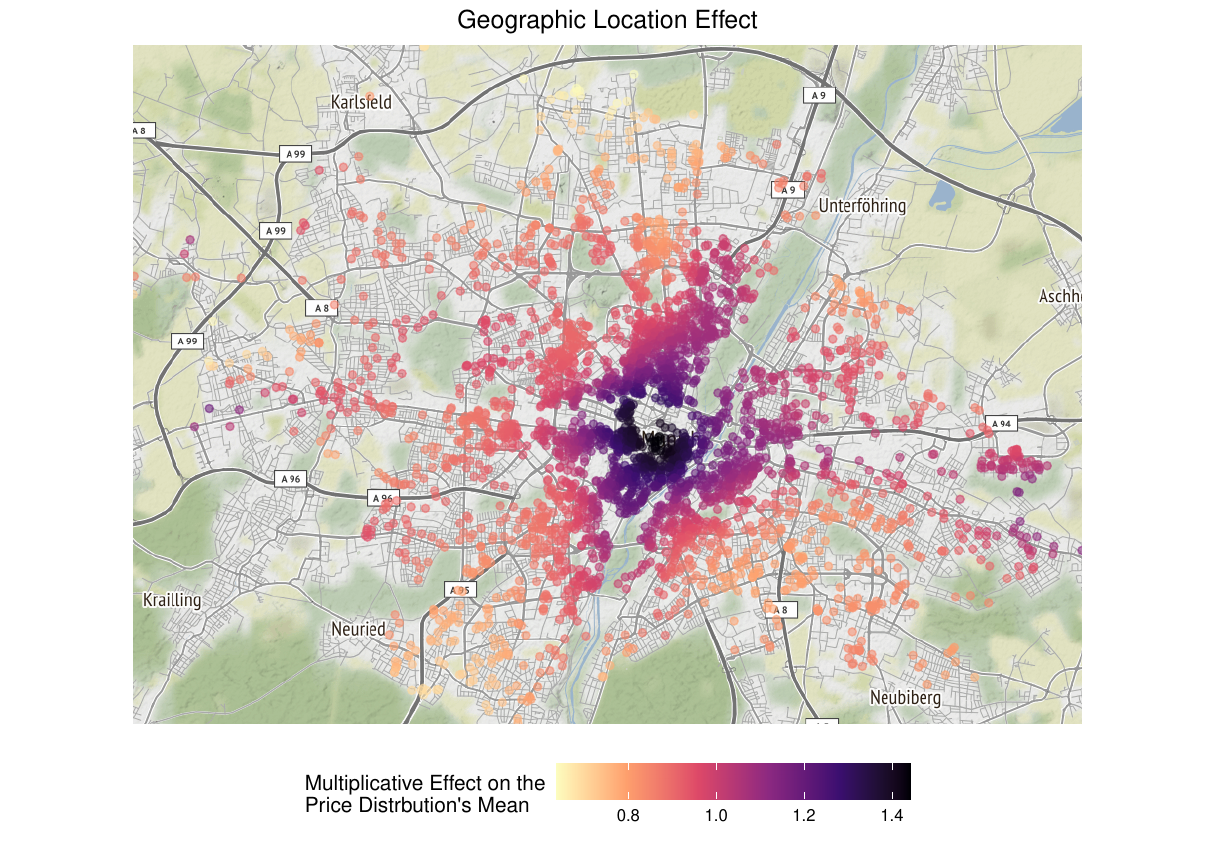}} 
 \caption{Estimated multiplicative effect (color) of the apartment's geographic location (points on the map) on the mean price value.}
 \label{munich}
 \end{center}
\end{figure}

\emph{Results}. Results suggest that structured effects are the main driving factor in our model. As images and descriptions are rather noisy in this data set, this result is not surprising. We observe a strong spatial effect of the geographic location on the log price of apartments. As shown in Figure~\ref{munich}, the more expensive apartments are located in the center of Munich, close to the English garden, or in rather prestigious areas. Relative to a typical apartment in Munich, the price for an apartment in the central part of Munich can, e.g., be up to 1.4 times more expensive, while a comparable apartment in, e.g., the south might cost only 60\% of the average room. 
The partial effect of the latent feature learned via the images and texts is only slightly correlated with the response. 
The multiplicative effect of remaining structured non-linear effects on the mean prices are visualized in Figure~\ref{peplots}. Here the number of accommodated people has the potentially highest impact with a tent-shaped partial multiplicative effect of over two times the average apartment price at $10$ accommodated people. The effects for $\eta_\sigma$ can be found in Table~2 and Figure~3 in the Supplementary Material. 
 \begin{figure}[tb]
 \vskip 0.2in
 \begin{center}
 \centerline{\includegraphics[width=0.7\columnwidth]{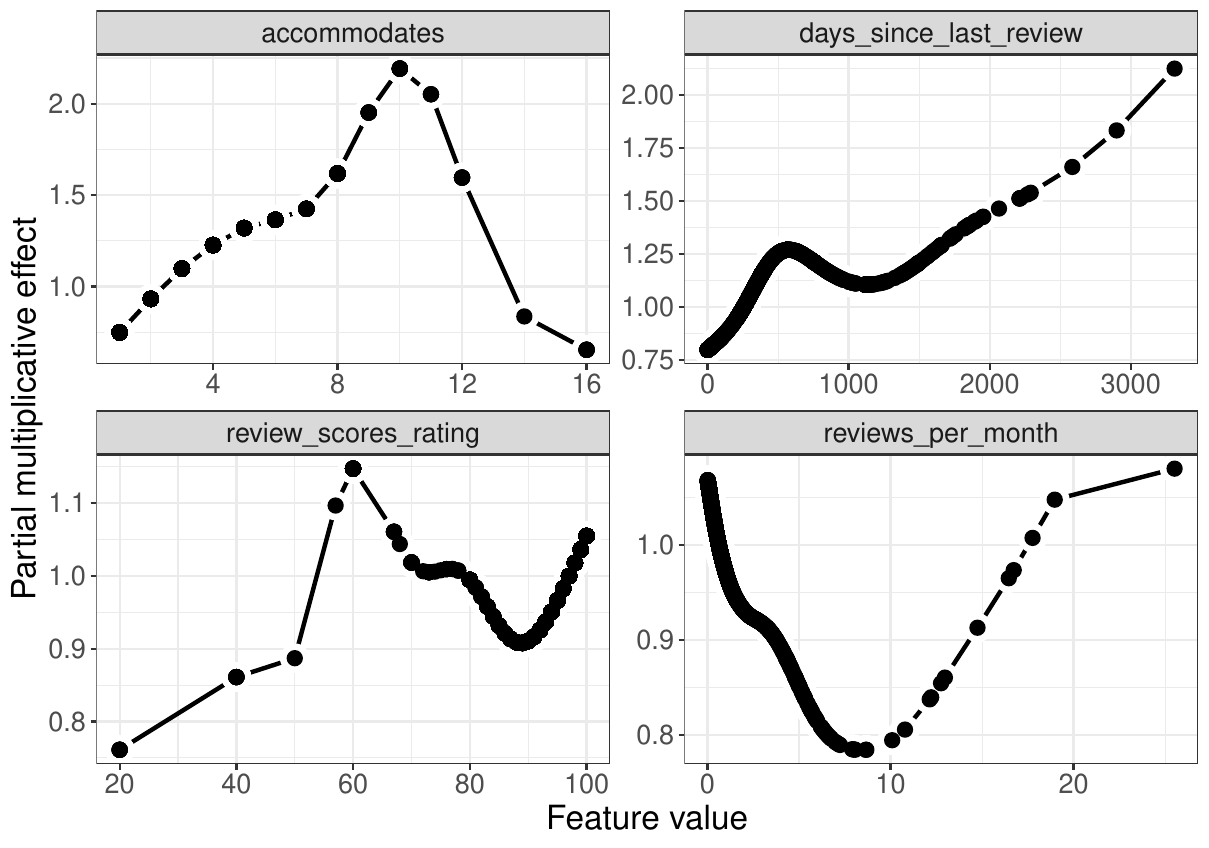}} 
 \caption{Estimated multiplicative effect (color) of  accommodates, days since last review, reviews per month and review rating, on the mean price value.}
 \label{peplots}
 \end{center}
\end{figure}

We also compare different model specifications of $\eta_\mu$ based on the correlation of predicted and true price values on train and test set. Table~\ref{tab:compOZ} compares the model with full information as given in Table~\ref{tab:feat} (Full), with and without orthogonalization (w/o OZ), with a model containing only structured effects (Structured), a model with structured effects and one DNN for the tabular data $u_3$ (Structured + DNN), a model with structured effects, DNN and text embedding (Structured + DNN + Texts) as well as a CNN where only the images are used (Images only). We here also find that the structured model part contributes the most, but adding the image information can help---however only when orthogonalized w.r.t. the structured effects. 

\begin{table}[ht]
\centering
\begin{tabular}{lrr}
  \hline
 Model & Train & Test \\ 
  \hline
 Structured & 0.68 & 0.64 \\ 
  Structured w/ DNN & 0.68 & 0.64 \\ 
  Full w/ OZ & 0.84 & \textbf{0.67} \\ 
  Full w/o OZ & 0.76 & 0.63 \\ 
  Structured + DNN + Texts & 0.68 & 0.64 \\ 
  Images only & 0.17 & 0.16 \\ 
   \hline
\end{tabular}
    \caption{Correlations of predicted and true values on train and test set (columns) for the six different models (rows).}
    \label{tab:compOZ}
\end{table}

\section{Conclusion and Outlook}\label{sec:conclusion}

In this work, we demonstrate why the combination of structured additive models and DNNs is not straightforward. 
We present a solution that provably permits the identification of structured effects in the presence of more flexible DNNs. We further develop a unified network architecture that combines SADR and DL by embedding the former into an overarching DNN. Using an orthogonalization cell enables the estimation of (interpretable) structured feature effects next to unstructured DNN predictors while ensuring identifiability of the structured model part(s). Simulations, benchmark studies, and a multimodal learning application demonstrate the generality and robustness of our proposed approach. 

While ensuring identifiability and interpretability of structured effects in the additive predictor, we note that the orthogonalization approach is much more versatile and can be applied to various other use cases beyond the identification of structured effects. In particular, using the orthogonalization, a structured effect can be ``subtracted'' from a latent effect learned from an unstructured data input, e.g., to adjust for confounders.

\section*{Acknowledgements}

We thank Almond St{\"o}cker and Dominik Thalmeier for their comments and helpful discussions. David R\"ugamer has been partly funded by the German Federal Ministry of Education and Research (BMBF) under Grant No. 01IS18036A. Nadja Klein acknowledges support by the Deutsche Forschungsgemeinschaft (DFG; German research foundation) through the Emmy Noether grant KL 3037/1-1.

\clearpage

\bigskip
\begin{center}
{\large\bf SUPPLEMENTARY MATERIAL}
\end{center}

\begin{description}

\item[Further Details:] The supplementary material includes proofs, algorithmic details as well as further specifications and results of numerical experiments.

\item[Reproducibility:] All codes used for this work are available at \url{https://github.com/davidruegamer/semi-structured_distributional_regression}.

\end{description}

\setlength{\bibsep}{0pt plus 0.3ex}
\bibliography{main}
\bibliographystyle{chicago}

\clearpage
\appendix
\onecolumn

\section{Proofs} \label{sec:proofs}

In the following, we provide proofs for our main results from Section~\ref{sec:ortho}.

\subsection{General Setting}

In the main part of the paper we assume w.l.o.g.~only one linear effect and one  DNN to be present in each model predictor $\eta_k$. However, both Lemma~\ref{lem:1} and Theorem~\ref{theo1}  hold in more general settings, since
\begin{enumerate}
\item multiple non-linear effects $f_{k,j}$ can be always represented as linear combinations $\bm{Z}\bm{\delta}_k$ with $\bm{Z} = (\bm{z}_1^\top, \ldots, \bm{z}_r^\top)^\top$ \citep[see, e.g.,][]{Wood.2017}. Thus an analogous proof holds when replacing $\bm{X}$ with the composed matrix $\widetilde{\bm{X}} := (\bm{X}|\bm{Z}) \in \mathbb{R}^{n\times (p+r)}$ and $\bm{\mathcal{P}}^\bot_X$ accordingly with $\bm{\mathcal{P}}^\bot_{\widetilde{X}}$;
\item further DNN predictors can be orthogonalized individually as the outputs of different DNNs $d_{k,1}, \ldots, d_{k,g_k}$ are combined as $\hat{\bm{U}}_{k,1}\bm{\gamma}_{k,1} + \ldots + \hat{\bm{U}}_{k,g_k}\bm{\gamma}_{k,g_k}$, allowing individual multiplication with $\bm{\mathcal{P}}_X$ (or $\bm{\mathcal{P}}^\bot_{\widetilde{X}}$) from the left;
\item identifiability when mixing several structured effects and several DNNs each with different inputs can be ensured by individually orthogonalizing each DNN output $\hat{\bm{U}}_{k,j}, j=1,\ldots,g_k,$ with respect to the union  of those features that intersect with any structured effect.  
\end{enumerate}

\subsection{Proof of Lemma~\ref{lem:1} (Orthogonalization)}

\begin{proof}
First decompose the predictor $\bm{\eta}_k$ into $\bm{\mathcal{P}}_X \bm{\eta}_k + \bm{\mathcal{P}}_X^\bot \bm{\eta}_k$. Plugging this decomposition into the definition of \eqref{thetak} shows that in the case when 
\\
a) the true linear effect of $\bm{X}$ is zero, it must hold that $\bm{\eta}_k = \bm{\mathcal{P}}^\bot_X \bm{\eta}_k$ and thus $\bm{\eta}_k = \bm{\mathcal{P}}^\bot_X (\bm{X}{\bm{w}}_k + \widetilde{\bm{U}}_k \bm{\gamma}_k) = \bm{0}_{n \times 1} + \widetilde{\bm{U}}_k \bm{\gamma}_k$,
\\
b) no unstructured effect is present, $\bm{\eta}_k = \bm{\mathcal{P}}_X \bm{\eta}_k$ and thus $\bm{\eta}_k = \bm{\mathcal{P}}_X(\bm{X}{\bm{w}}_k + \bm{\mathcal{P}}_X^\bot \widehat{\bm{U}}_k \bm{\gamma}_k) = \bm{X}\bm{w}_k + \bm{0}_{n \times 1}$ and c) both effect types are present multiplying both sides of \eqref{thetak} with either $\bm{\mathcal{P}}_X$ or $\bm{\mathcal{P}}_X^\bot$ yields the desired property.  
\end{proof}

\subsection{Proof of Theorem~\ref{theo1} (Identifiability)}

\begin{proof} 
Assume there exists an identifiability issue of linear effects of $\bm{X}$, i.e., there exists $ \bm{\xi}_k: \bm{\mathcal{P}}_X \bm{\eta}_k = (\bm{X}\bm{w}_k - \bm{\xi}_k) + \bm{\xi}_k$ for at least one $k\in\lbrace 1,\ldots,K\rbrace$ with $\bm{\mathcal{P}}_X \widetilde{\bm{U}}_k\bm{\gamma}_k =: \bm{\xi}_k \neq \bm{0}_{n \times 1}$. However, from the decomposition in Lemma~\ref{lem:1} it directly follows that $\xi_k \equiv  \bm{0}_{n \times 1}$.
\end{proof}

\section{Further Details on Simulation Study}\label{app:sim}

\subsection{Details on Section~\ref{sec:simident}}

The functions used to generate non-linear relationships are:
\begin{itemize}
    \item $f_1(x) = \cos(5x)$,
    \item $f_2(x) = \tanh(3x)$,
    \item $f_3(x) = -x^3$,
    \item $f_4(x) = \cos(3x-2) \cdot (-3x)$,
    \item $f_5(x) = \exp(2x) -1$,
    \item $f_6(x) = x^2$,
    \item $f_7(x) = \sin(x)\cos(x)$,
    \item $f_8(x) = \sqrt(|x|)$,
    \item $f_9(x) = -x^5$,
    \item $f_{10}(x) = \log(x^2)/100$.
\end{itemize}
To ensure identifiability in the data generating process also, we remove linear trends in these functions prior to generating the outcome using the additive predictor.

\subsection{Details on Section~\ref{sec:simcomp}}

We generate all features using a standard uniform distribution. The coefficients of linear effects are drawn from a uniform distribution with limits $-3$ and $3$. We use the non-linear functions as defined in the previous subsection and additionally use the following 3 functions in cases where more than 10 non-linear functions are used (due to the multiple distribution parameters and depending on the amount of overlap between the predictors in these parameters):
\begin{itemize}
    \item $f_{11}(x) = \sin(10x)$,
    \item $f_{12}(x) = (0.1\sin(10x) + 1) \cdot I(x<0) + (-2x + 1) \cdot I(x \geq 0)$,
    \item $f_{13}(x) = -x \cdot \tanh(3x) \cdot \sin(4x)$.
\end{itemize}
As intercepts for the location we use $1$, for the scale $-1$. After adding up intercept, linear effects and non-linear effects, the additive predictors $\eta_k, k=1,2$ are transformed according to the link function of the corresponding distribution and distribution parameter. Using the resulting distribution parameters $\theta_k$, the outcome values are drawn from the distribution defined by these parameters.

\section{Benchmark Studies: Further Experiments}\label{app:benchmark}

In the following, we provide further benchmark experiments on the use of our framework for quantile regression (Section~\ref{sec:deepdist}), calibrated regression
(Section~\ref{app:benchmark:1}) and high-dimensional classification (Section~\ref{sec:high}).

\subsection{Deep Distributional Models for Quantiles} \label{sec:deepdist}

To illustrate the distributional aspect of our framework, we consider the motorcycle data set from \citet{Silverman.1985} and reanalyzed in a distributional context in \citet{Rodrigues.2018}. In contrast to \citet{Rodrigues.2018}, who present a framework to jointly predict quantiles, our approach models the entire distribution as $\mathcal{N}(d_\mu(time), \exp(b + f(time)))$, yielding predictions for all quantiles in $(0,1)$ in one single model. Here, $d_{\mu}$ is defined as a NN with two hidden layers, tanh and linear activation, and 50 and 10 hidden units, respectively. Optimization details are the same as in Subsection~\ref{subsec:deepmixed}.

\emph{Results}: As we model the distribution itself and not the quantiles explicitly, our approach does not suffer from quantile crossings. Using the quantiles $0.1,0.4,0.6$ and $0.9$, the approach by \citet{Rodrigues.2018} yields an RMSE of 0.526 (0.003) with an average of 3.050 (0.999) quantile crossings on all test data points. In contrast, our approach with DNN for the mean and linear time effect for the distribution's scale 
exhibits no quantile crossings and yields an RMSE of 0.536 (0.016). 
 Figure~\ref{motor} visualizes the results for the analysis of the motorcycle data from SSDR with estimated mean curve and four different quantiles inferred from the estimated distribution.
 \begin{figure}[htbp]
 \vskip 0.2in
 \begin{center}
 \centerline{\includegraphics[width=0.7\columnwidth]{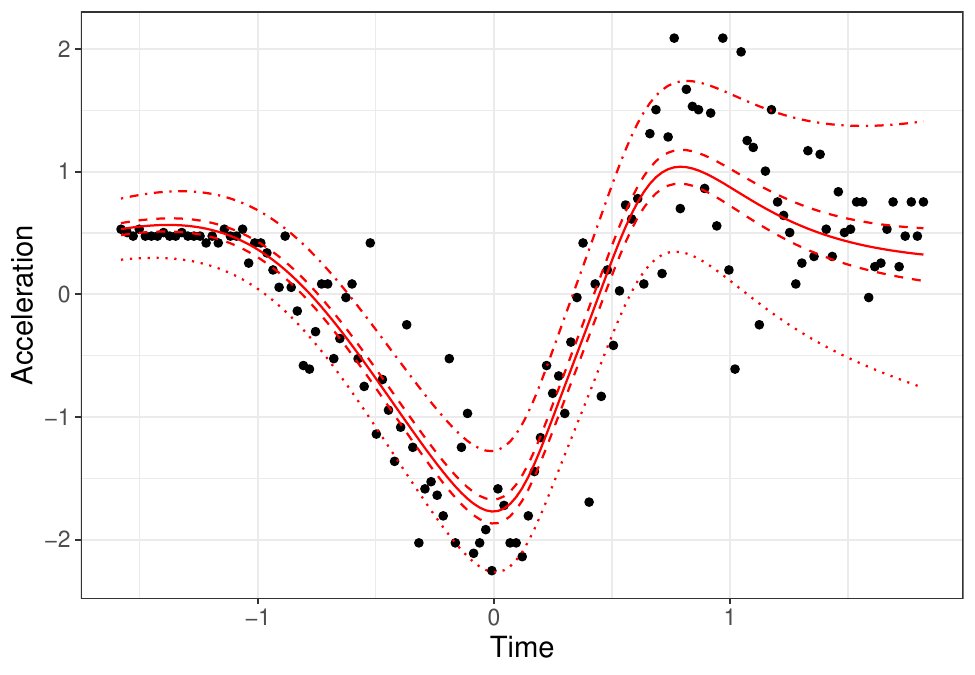}} 
 \caption{Acceleration data over time for motorcycle data, with estimated mean (solid line), 40\%-, 60\%-quantiles (dashed lines) and 10\%- as well as 90\%-quantiles (dashed-dotted line) in red.}
 \label{motor}
 \end{center}
\vskip -0.2in
\end{figure}

\subsection{Deep Calibrated Regression}\label{app:benchmark:1}

SSDR assumes a normal distribution $\mathcal{N}({\mu}, {\sigma}^2)$ for the output with parameters $\mu$ and $\sigma^2$. We denote their respective DNN predictors by $d_\mu(\cdot)$ and $d_\sigma(\cdot)$, with the number of hidden units in brackets. Specifically for the four data sets, we model $\mu,\,\sigma^2$ as $b_\mu + f(z_3) + d_\mu(4)$, $\exp(b_\sigma + d_\sigma(4))$ (Diabetes); $b_\mu+\bm{x}^\top\bm{w}+\sum_{j=1}^{11} f_j(z_j)+d_\mu(32,16,4)$, $\exp(b_\sigma + d_\sigma(2))$, where $\bm{x}=(z_1,\ldots,z_J)^\top, J=13$ (Boston);  ${b_\mu + \sum_{j\in J} f_j(z_j) + d_\mu(16,4)}$, $\exp(b_\sigma + d_\sigma(16,4))$, where $J=\{1,2,5\}$ being the indices for three of five available numerical features (Airfoil); and $b_\mu + d_\mu(16,4)$, $ \exp(b_\sigma + \sum_j w_{j})$ with index $j \in \{1,\ldots,12\}$ for each month (Forest Fire). For the DNNs, we use tanh activation functions in the hidden layers with 16 and 4 units followed by a single unit layer with linear activation. Based on early experiments we altered this default for Boston by making the DNN for the mean more complex (with architecture 32-16-4-1) while reducing the complexity of the DNN for the scale (2-1). For Diabetes we also reduced the complexity of the DNNs for both parameters by removing the layer with 16 units. Adam optimizer with a learning rate default of $0.001$ and a batch size of 32 was employed to fit the models, with the number of epochs chosen by 5-fold cross-validation.

\subsection{High-Dimensional Classification} \label{sec:high}

\citet{Ong.2018} aim at predicting various forms of cancer in high-dimensional gene expression data sets (Colon, Leukaemia, and Breast cancer). The authors propose VAFC, a Bayesian approach utilizing horseshoe priors and VI. We implement SSDR using a Bernoulli distribution and combine a linear model with a small DNN (either one or two hidden layers, ReLu activations, and up to 16 hidden units). The model is applied to all three data sets with training sample sizes of 42, 38, and 38 and test set sizes of 20, 34, and 4, respectively.  The number of features is $p=2000$ (Colon) and $p=7129$ (Leukaemia, Breast). As an additional comparison, we fit a standard DNN (sigmoid activation function and binary cross-entropy loss) with the same architecture as for the SSDR approach, but no additional structured predictors. SSDR and DNN models are optimized with Adam using a learning rate of $0.001$ and the number of epochs chosen by cross-validation.

\emph{Results}. Table~\ref{cancer} compares the average (standard deviation) of the Area under the Receiver Operator Characteristic Curve (AUROC). While all approaches yield an AUROC of one on the Colon cancer data, our SSDR approach is able to outperform the VAFC and standard DNN approach on the other two data sets. 

\begin{table}[htbp]
\caption{Comparison of AUROC on three cancer data sets (first row: Colon cancer; second row: Leukaemia; thrid row: Breast cancer) for our method, a simple DNN and the VAFC (with different number of factors)}
\label{cancer}
   \vskip 0.15in
\begin{center}
\begin{tabular}{cccc}
  SSDR & DNN & VAFC (4) & VAFC (20) \\\hline
    1.00 (0.00) & 1.00 (0.00) &   1.00 (0.00) & 1.00 (0.00) \\
  \textbf{0.98} (0.02) & 0.82 (0.23)  & 0.91 (0.06) & 0.90 (0.07)\\
\textbf{1.00} (0.00) & \textbf{1.00} (0.00)   & 0.95 (0.10) & 0.84 (0.12) \\
\hline
\end{tabular}
\end{center}
\vskip -0.2in
\end{table}

\section{Application: Further Details} \label{app:application}

In the following, we provide further details on the presented Airbnb application. We briefly describe the training of our framework, further results of the proposed model, and the CNN architecture used in the models with images in the unstructured predictor part. All specifications are trained with Adam for $4000$ epochs with early stopping, using a learning rate of $0.001$ and batch size of 32.

\subsection{Warm Starts}

In the application, we first estimate a structured model without any DNN and use the estimated coefficients as a warm start for the multimodal network. This effectively tackles the problem of different learning speeds of the two model parts without the need for a dedicated optimization routine.

\subsection{CNN Architecture}\label{app:cnn}

The trunk of our CNN consists of 3 convolution blocks (filters = 32, 64, 128, kernel sizes = (3,3), ReLU activation, batch normalization, max-pooling 2D with pool sizes = (3,3), (2,2), (2,2) and dropout layers with rate 0.25) followed by a flatten layer, an FC layer with 128 units, a ReLU activation, batch normalization, a layer with 20\% dropout and a FC layer with 64 units. The network head is a 1-unit hidden layer with linear activation to allow for orthogonalization.

\subsection{Further Results}

Figure~\ref{image_munich} visualizes the density of the partial effect $\widetilde{\bm{U}}_1 \hat{\gamma}_1$ of images on the mean apartment prices.
 \begin{figure}[ht]
 \vskip 0.2in
 \begin{center}
 \centerline{\includegraphics[width=\columnwidth]{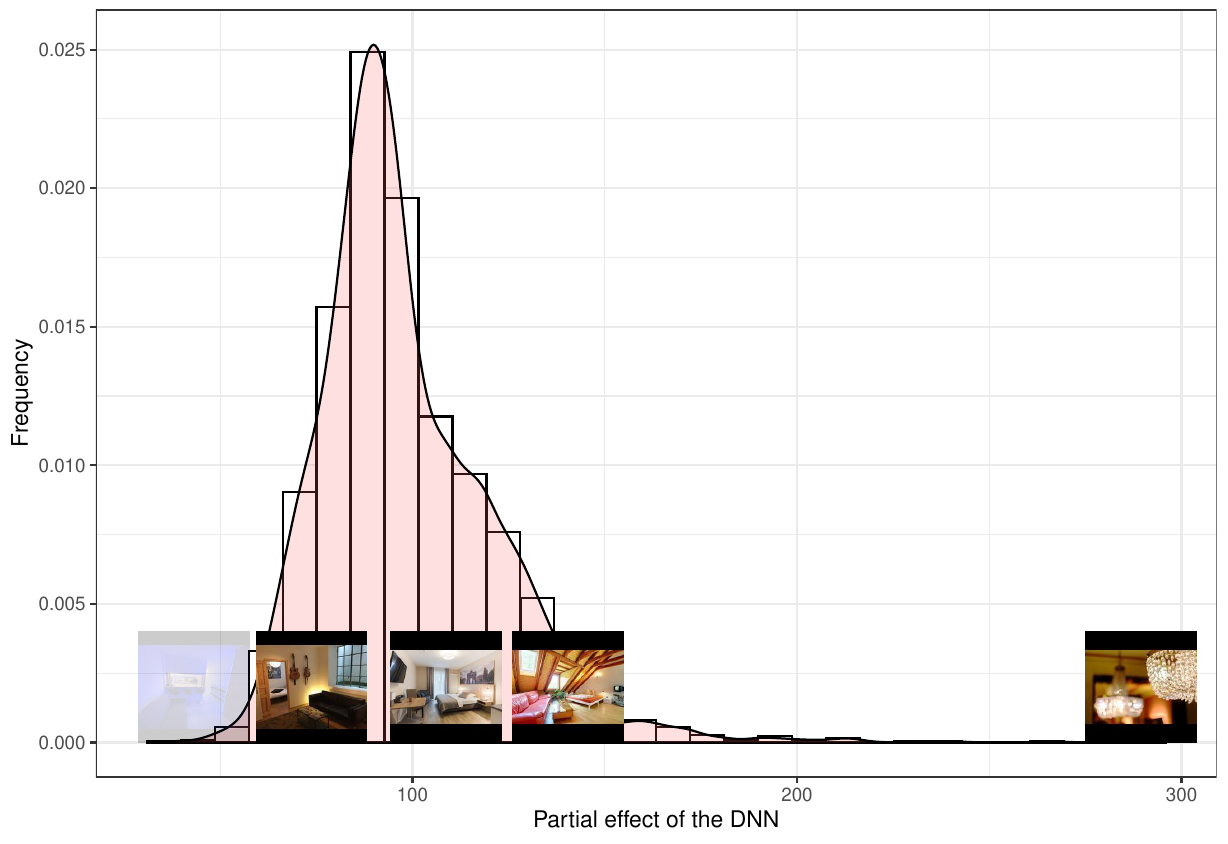}} 
 \caption{Distribution of partial effects of the image DNN for the mean apartment price with five selected examples.}
 \label{image_munich}
 \end{center}
\vskip -0.2in
\end{figure}
The estimated coefficients for the structured linear predictors are given in the following Table~\ref{tab:struct_pred} with no entries if the effect was not included in the predictor. The reference apartment in the intercept corresponds to $0$ bedrooms and beds (which means either the entry is missing or there is no dedicated bedroom, but instead, e.g., a dormitory) and an entire home or apartment as room type. 
\begin{table}[ht]
\centering
\begin{tabular}{r|cc}
  \hline
 & $\eta_\mu$ & $\exp(\eta_\sigma)$  \\ 
  \hline
Intercept & 37.46 & 2.09 \\
  room type = Hotel room & 1.29 & 2.53 \\ 
  room type = Private room & 0.74 & 2.51 \\ 
  room type = Shared room & 0.78 & 5.21 \\ 
  bedrooms = 1 & 1.12 & - \\ 
  bedrooms = 2 & 1.42 & - \\ 
  bedrooms = 3 & 1.77 & - \\ 
  bedrooms = 4 & 2.41 & - \\ 
  beds = 1 & 1.27 & - \\ 
  beds = 2 & 1.34 & - \\ 
  beds = 3 & 1.33 & - \\ 
  beds = 4 & 1.28 & - \\ 
  beds = 5 & 1.61 & - \\ 
  beds = 6 & 1.25 & - \\ 
   \hline
\end{tabular}
\caption{Partial multiplicative effect of structured linear predictors for the location ($\eta_\mu$) and scale ($\eta_\sigma$)}
\label{tab:struct_pred}
\end{table}

The effect of the geographic location on the estimated distribution scale is depicted in Figure~\ref{munich_scale}.
 \begin{figure}[ht]
 \vskip 0.2in
 \begin{center}
 \centerline{\includegraphics[trim = {3.5cm, 0, 3.5cm, 0}, width=0.6\columnwidth]{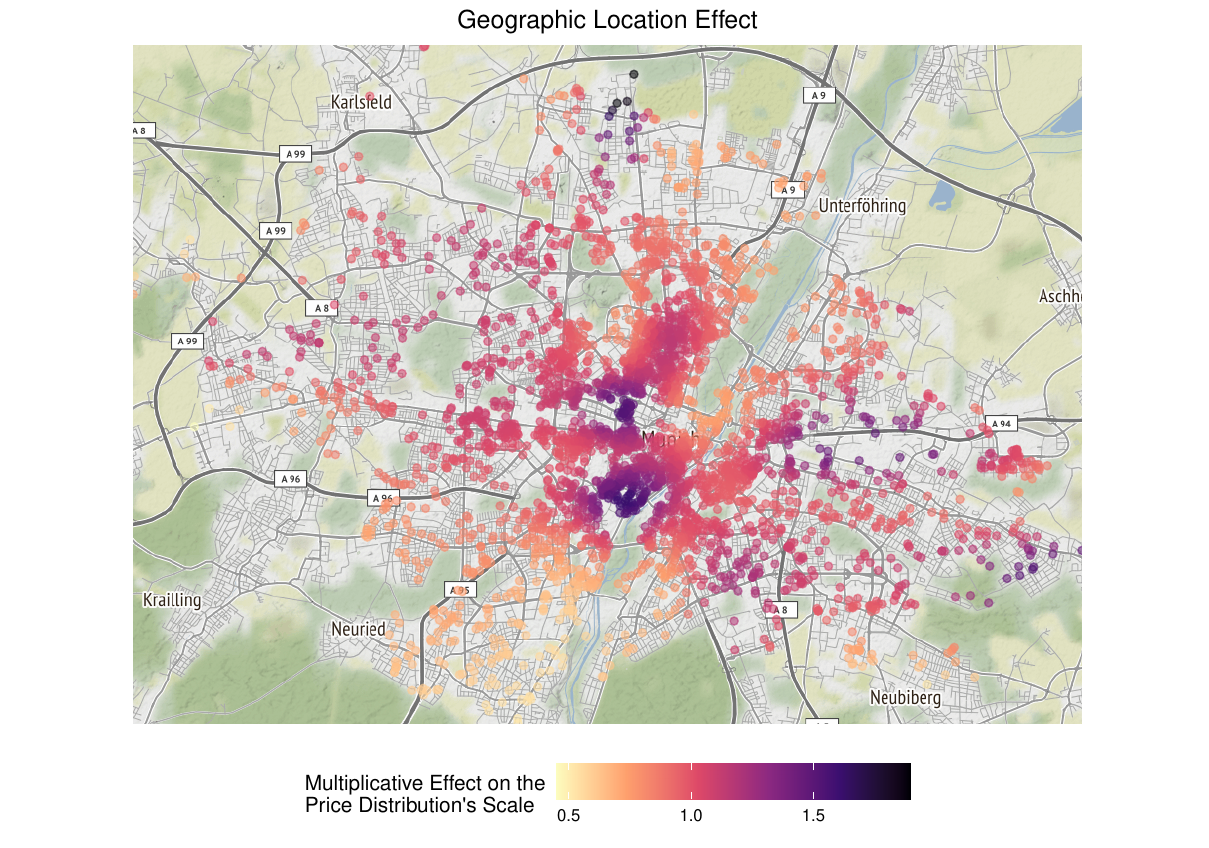}} 
 \caption{Estimated multiplicative effect (color) of the apartment's geographic location (points on the map) on the scale of the price value distribution.}
 \label{munich_scale}
 \end{center}
\vskip -0.2in
\end{figure}
Similar to the effect on the mean, the scale parameter is larger in the center of Munich, but also in some less prestigious areas. The effect of these locations is yet much more pronounced than the effect on the mean.

\section{Details on Computing Environment}

Benchmarks and simulation studies were performed on a Linux server with 32 CPUs (Intel(R) Xeon(R) CPU E5-2650 v2 @ 2.60GHz), 64 GB RAM and took several minutes (Benchmarks, Identifiability Study) to 4 days (Model Comparison Study). The application was performed on a personal computer with 4 CPUs (Intel(R) Core(TM) i7-8665U CPU @ 1.90GHz), 16 GB RAM and took 6.8 hours.

\end{document}